\pdfoutput=1

\typeout{IJCAI-19 Instructions for Authors}


\documentclass{article}
\pdfpagewidth=8.5in
\pdfpageheight=11in
\usepackage{ijcai19}

\thispagestyle{empty}        \pagestyle{empty}

\def\suppmat{1}
%

%

\usepackage{times}
\usepackage{nimbusmononarrow}
\usepackage{soul}
\usepackage{url}
\usepackage[hidelinks]{hyperref}
\usepackage[utf8]{inputenc}
\usepackage[small]{caption}
\usepackage{xcolor}
\usepackage{graphicx}
\usepackage{amsmath}
\usepackage{amsthm}
\usepackage{amssymb}
\usepackage{booktabs} 

\definecolor{dkblue}{cmyk}{1,.54,.04,.19}
\hypersetup{
      bookmarks=true,         
      unicode=false,          
      pdftoolbar=true,        
      pdfmenubar=true,        
      pdffitwindow=false,     
      pdfstartview={FitH},    
      pdfcreator={pdflatex},   
      pdfproducer={Producer}, 
      pdfnewwindow=true,      
      colorlinks=true,        
      linkcolor=dkblue,       
      citecolor=dkblue,       
      filecolor=dkblue,       
      urlcolor=dkblue,        
}

\usepackage{listings} 
\urlstyle{same}
\usepackage{mathtools}
\usepackage[capitalise]{cleveref}
\crefname{listing}{Algorithm}{Algorithms}
\Crefname{listing}{Algorithm}{Algorithms}

\usepackage{wrapfig}

\theoremstyle{definition}
\newtheorem{theorem}{Theorem}
\newtheorem{lemma}[theorem]{Lemma}

\newtheorem{remark}[theorem]{Remark}

\definecolor{darkred}{rgb}{.7,0,0}
\definecolor{darkgreen}{rgb}{0,.5,0}
\definecolor{darkblue}{rgb}{0,0,.8}
\definecolor{darkcyan}{rgb}{0,0.6,.6}
\definecolor{darkorange}{rgb}{.8,.4,0}
\definecolor{gray}{rgb}{.4,.4,.4}

\newcommand{\Nopt}{N_*}

\newcommand{\comment}[1]{\textcolor{darkblue}{(\emph{#1})}}

\newcommand{\Naturals}{\mathbb{N}_1}
\newcommand{\Nonnegints}{\mathbb{N}_0}
\newcommand{\Reals}{\mathbb{R}}

\newcommand{\eg}{\emph{e.g.}, }

\newcommand{\eps}{\varepsilon}

\newcommand{\ceiling}[1]{\left\lceil#1\right\rceil}
\newcommand{\floor}[1]{\left\lfloor#1\right\rfloor}

\newcommand{\citet}[1]{\citeauthor{#1}~[\citeyear{#1}]}


\lstset{basicstyle=\ttfamily,breaklines=true,basewidth=0.5em}
\lstset{framextopmargin=1pt,frame=lines}
\lstset{showstringspaces=false}
\lstset{numbers=left,numbersep=2pt}
\lstset{
  language=python,
  escapeinside={(*}{*)},
  mathescape=true,
  keywordstyle=\color{blue},
  morekeywords={to,not,exists,in,repeat},
  commentstyle=\color{darkorange}\emph,
  classoffset=1,
  keywordstyle=\color{darkred},
  morekeywords={LevinTS, none, Node},
  classoffset=0,
  }

\pretolerance=5000
\tolerance=9000
\emergencystretch=0pt
\righthyphenmin=4
\lefthyphenmin=4

\newcommand{\magic}{\text{A6519}}

\newcommand{\astar}{A$^*$}
\newcommand{\ida}{IDA$^*$}
\newcommand{\eda}{EDA$^*$}
\newcommand{\idacr}{\ida\_CR}
\newcommand{\zoomer}{Zoomer}
\newcommand{\zzzlong}{ZigZagZoomer}
\newcommand{\zzz}{Z3} 
\newcommand{\deepening}{\texttt{DFS}} 

\newcommand{\defeq}{\ensuremath{\coloneqq}}

\newcommand{\node}{n}
\newcommand{\children}{\mathcal{C}}

\newcommand{\nodeset}{\mathcal{N}}

\newcommand{\thresh}{\theta} 
\newcommand{\fcost}{\ensuremath{f}} 
\newcommand{\cost}{\fcost}
\newcommand{\gcost}{\ensuremath{g}} 
\newcommand{\hcost}{\ensuremath{h}} 

\newcommand{\branch}{b} 
\newcommand{\openset}{\mathcal{M}} 

\newcommand{\zoomfact}{\omega_1}
\newcommand{\zzzfact}{\omega_2}
\newcommand{\nodecount}{N}
\newcommand{\goalset}{\mathcal{G}}
\newcommand{\magicshort}{A}

\newcommand{\fgap}{\delta} 
\newcommand{\fgapopt}{\delta^*} 
\newcommand{\fgapmin}{\delta_\text{min}} 

\newcommand{\threshup}{\thresh^{+}} 
\newcommand{\threshdown}{\thresh^{-}} 

\newcommand{\outabudget}{\texttt{"budget exceeded"}}
\newcommand{\upper}{\texttt{upper}}
\newcommand{\llower}{\texttt{lower}}

\newcommand{\ceilingpos}[1]{\ceiling{#1}^{\scriptscriptstyle +}} 

\title{Zooming Cautiously: \\
Linear-Memory Heuristic Search With Node Expansion Guarantees
}

%
%

\author{
Laurent Orseau$^1$, Levi H. S. Lelis$^2$, Tor Lattimore$^1$
\affiliations
$^1$DeepMind, UK\\
$^2$Universidade Federal de Viçosa, Brazil \\
\emails
\{lorseau, lattimore\}@google.com,
levi.lelis@ufv.br
}

\begin{document}

\maketitle

\begin{abstract}
We introduce\footnote{This paper and another independent IJCAI 2019 submission have been merged into a single paper that subsumes both of them~\cite{helmert2019ibex}. This paper is placed here only for historical context. Please only cite the subsuming paper.}
and analyze two parameter-free linear-memory tree search algorithms. Under mild assumptions we prove our algorithms are guaranteed to perform only a logarithmic factor more node expansions than \astar{}
when the search space is a tree.  Previously, the best guarantee for a linear-memory algorithm under similar assumptions was achieved by \ida{}, which in the worst case expands quadratically more nodes than in its last iteration. Empirical results support the theory and demonstrate the practicality and robustness of our algorithms. Furthermore, they are fast and easy to implement.%
\ifdefined\suppmat{}
\else
\footnote{The 1-page supplementary material can be found at \url{\extlink}.}
\fi%
\end{abstract}

\section{Introduction}

The \astar{} algorithm \cite{hart1968astar} is optimal in the sense that it only expands nodes for which the cost is smaller than the goal \cite{DP83}. Unfortunately, in the worst case the memory use of \astar{} grows linearly with its running time, which can be a bottleneck for large problems.
Thus, it is often preferable to have algorithms with a smaller memory footprint, possibly at the expense of an increased search time. The best one can expect from a complete algorithm is that the memory depends linearly on the solution depth~\cite{korf1985ida}.

The landmark algorithm for low-memory search is \ida{} \cite{korf1985ida}. At the same time, in problem domains that can be represented by a tree, \ida{} is guaranteed to expand at most $O(\Nopt^2)$ nodes, where $\Nopt$ is the number of expansions required by \astar{} with ties broken for the worst case, assuming the search space is a tree. Although such theoretical guarantees are welcome, in many problems \ida{} really does expand $\Omega(\Nopt^2)$ nodes. This occurs especially in domains with a large diversity of edge costs \cite{sarkar1991reducing}.
Several methods have been developed to mitigate this issue~\cite{burns2013idacr,sharon2014eda,hatem2018solving,sarkar1991reducing,wah1994rida}, but theoretical guarantees (when provided) depend on strong assumptions such as uniformity of the costs or of the branching factor
\cite[for example (cf. Assumptions 1, 2 and 3)]{hatem2015rbfscr}. Another low-memory search algorithm is
RBFS
\cite{1993rbfskorf}, which has the same worst-case number of node expansions as \ida{}.

There is a large literature on algorithms that accept a memory budget as a parameter \cite[and many others]{sen1989fast,Rus92,ZH03}.
Our focus, however, is on lightweight algorithms for which the memory footprint grows linearly with the search depth.
Although we restrict our analysis to tree search, we expect that techniques for graph search, such as transposition tables~\cite{ReinefeldM94}, can be incorporated into our approach.

Like \astar{}, our algorithms use a cost function $\fcost$ on the nodes that we assume to be nondecreasing from parent to child.
Importantly, and in contrast to previous works, we make no regularity assumptions on the search tree, such as polynomial or exponential growth, or on the distribution of $\fcost$-values in the search tree.

We propose two novel parameter-free algorithms that enjoy theoretical guarantees on the number of node expansions relative to \astar{} for tree search and use memory that grows linearly with the search depth.
The first algorithm, \zoomer{}, expands at most $O(\Nopt\log(\thresh^*/\fgapmin))$
nodes, where $\thresh^*$ is the cost of an optimal solution
and $\fgapmin$ is the minimum difference of cost thresholds between two iterations of \ida{}---it is bounded below by the numerical precision, and for integer costs $\fgapmin=1$.
The log factor corresponds to the number of bits necessary to encode
$\thresh^*$ within precision $\fgapmin$.
The central idea of this algorithm is to cap the number of node expansions at $2^k$ nodes at iteration $k$, and perform at each such iteration a binary search for the cost threshold.

The second algorithm, \zzzlong\ (\zzz),
interleaves the iterations of \zoomer{} instead of performing them sequentially.
This is achieved by using a scheduler that may be of independent interest.
If \zoomer{} finds the solution in $M$ node expansions, \zzz{} will not take more than a factor $\log M$ compared to \zoomer,
and removes the explicit dependence on $\fgapmin$ to replace it with the difference $\fgapopt$ of $\fcost$ cost between the solution cost and the next node of higher cost.

Empirical results in heuristic search domains and in two domains we introduce show that although previous algorithms can perform well in some domains, they can fail in others, depending on the underlying structure of the problem. By contrast, both \zzz\ and \zoomer\ are robust to the structure of the problems, performing well in all domains.

\section{Notation and Background}\label{sec:notation}
Let $\Naturals \defeq \{1, 2, 3\ldots\}$,
$\Nonnegints \defeq \{0, 1, 2\ldots\}$
and $\ceilingpos{x} \defeq \max (0, \ceiling{x})$.
The set of all nodes in the underlying search tree describing the problem is $\nodeset$, which may be infinite.
$\children(\node) \subset \nodeset$ is a function that returns the set of children of a node $\node$.
The maximum branching factor of the search tree is $\branch\defeq\max_{\node\in\nodeset} |\children(\node)|$.
We are given a cost function, $\cost:\nodeset\to(0, \infty)$
and assume that $f$ is nondecreasing so that $f(m) \geq f(n)$ for all $n \in \nodeset$ and $m \in \children(n)$.
Let $\nodeset(\thresh) \defeq \{\node\in\nodeset: \cost(\node) \leq \thresh\}$
be the set of nodes for which the cost does not exceed $\thresh$.
Let
$\openset(\thresh) \defeq \cup_{\node\in\nodeset(\thresh)} \children(\node) \setminus \nodeset(\thresh)$ be the nodes at the fringe.
The set of all solution nodes is $\goalset\subseteq\nodeset$.
The cost of an optimal solution is $\thresh^* \defeq \min_{n \in \goalset} \cost(n)$.
Let $\nodecount(\thresh) \defeq |\nodeset(\thresh)|$ and
$\Nopt \defeq \nodecount(\thresh^*)$.
Define $\fgap(\thresh) \defeq \min_{\node\in\openset(\thresh)}\cost(\node) - \thresh$,
which is strictly positive by the assumption that the $\fcost$ cost is nondecreasing.
Furthermore, $\nodecount(\thresh') = \nodecount(\thresh)$ for all $\thresh' \in [\thresh, \thresh + \fgap(\thresh))$.
Let
$\fgapmin{} \defeq \min_{n \in \nodeset, \fcost(\node)\leq\thresh^*} \delta(\cost(n))$, which corresponds to the minimum difference in cost thresholds between two iterations of \ida{} before the optimal solution.
Finally, let $\fgapopt \defeq \fgap(\thresh^*)$.

\begin{remark}
In many definitions of tree search the edges in the tree are associated with costs and $\fcost = \gcost + \hcost$ where $\gcost(n)$ is the cumulative cost from the root to $n$ and $\hcost : \nodeset \to \Reals$ is a consistent heuristic, which guarantees that $\cost$ is nondecreasing.
In case $\hcost$ is admissible but not consistent, $\fcost$
can be made monotone nondecreasing using pathmax~\cite{mero1984pathmax,FelnerZHSSZ11}.
In our analysis it is convenient to deal only with $\fcost$, however.
\end{remark}

\paragraph{Iterative deepening}

\begin{figure}[bt!]
\begin{lstlisting}[numbers=none,label=lst:deepening,caption={Depth-First Search (DFS) starting at the given node with a cost bound $\thresh$
and an expansion budget \texttt{N}.
Returns \texttt{"budget exceeded"} when the budget of node expansions is exceeded,
otherwise returns the descendant solution of minimum cost if one exists, or none otherwise.
It also returns the maximum cost $\threshdown$ below $\thresh$ of expanded nodes,
and the minimum cost $\threshup$ above $\thresh$ of non-expanded nodes among the visited nodes.
}]
def (*\deepening*)(node, $\thresh$, N):
  c := $\fcost$(node)
  if c > $\thresh$: return none, 0, $-\infty$, c
  if is_goal(node): return node, 0, c, $+\infty$
  if N == 0: return "budget exceeded", 0, c, $+\infty$

  # Node expansion
  n_used := 1 # number of expansions (*\label{dfs:nused}*)
  best_desc = none # best descendant solution
  $\threshup$ := $+\infty$ # min cost among nodes in fringe
  $\threshdown$ := c # max cost among nodes expanded
  for child in $\children$(node):
    res, m, $\threshdown_2$, $\threshup_2$ := (*\deepening*)(child, $\thresh$, N - n_used)
    n_used += m
    $\threshdown$ := max($\threshdown$, $\threshdown_2$)
    $\threshup$ := min($\threshup$, $\threshup_2$)
    if res is a Node and ( best_desc is none or $\fcost$(res) < $\fcost$(best_desc) ):
      best_desc := res # better solution found
      $\thresh$ := $\fcost$(res) # branch and bound
    elif res == "budget exceeded":
      return res, n_used, $\threshdown$, $\threshup$

  return best_desc, n_used, $\threshdown$, $\threshup$
\end{lstlisting}
\end{figure}

An iterative deepening tree search algorithm makes repeated depth-first searches (\deepening) with increasing cost threshold. \ida{} and its variants are all based on this idea, as are our algorithms, but with a few new twists.

The pseudo-code for \deepening{} is given in \cref{lst:deepening}, which is
largely the classic implementation
with branch-and-bound optimization to avoid visiting provably suboptimal paths.
It returns an optimal solution or no solution, the number of nodes expanded,
the maximum cost $\threshdown$ among the visited nodes whose costs do not exceed $\thresh$,
and the minimum cost $\threshup$ of the visited nodes whose costs exceed $\thresh$.
Thus we have $\threshdown \leq \thresh < \threshup$.
A node is said to be \emph{expanded} when \deepening{} passes through \cref{dfs:nused}.

Slightly less usual, it also takes as an argument a budget on the number of nodes to expand, and immediately terminates with
\outabudget{} when too many nodes have been expanded.
If the budget is not exceeded, then
$\threshdown = \max\{\cost(\node):\node\in\nodeset(\thresh)\}$
and $\threshup = \min\{\cost(\node):\node\in\openset(\thresh)\}$.
Observe that if during one call to \deepening{}
a suboptimal solution is found and the search budget is not exceeded, then necessarily an optimal solution will be returned.
This optimality property is transferred to all the algorithms presented in this paper.

\ida{} always calls \deepening{} with unlimited budged and it starts with threshold $\theta = \cost(\texttt{root})$, subsequently using $\theta = \theta^+$ from the previous iteration.

If the cost $\thresh^*$ of the optimal solution were known in advance, then a single call to \deepening{} with threshold $\thresh^*$ and an unlimited budget would find the optimal solution with no overhead relative to \astar{} in trees. Of course the optimal cost is usually unknown, which is overcome by calling \deepening{} repeatedly with increasing thresholds.
Algorithms of this type over-expand relative to \astar{} for two reasons. First, in early iterations they re-expand many of the same nodes. Second, in the final iteration when $\thresh \geq \thresh^*$, they tend to overshoot. Overshooting is generally more costly than undershooting because trees usually grow quite fast, which is why \ida{} increases $\thresh$ in the most conservative manner possible.

To illustrate a typical case, suppose that \deepening{} is called with an unlimited budget and threshold $\thresh = \thresh^* + c$ for $c \geq \fgapmin$.
Since $\fcost$ can be constant with depth,
that the number of nodes $\nodecount(\thresh)$ can be arbitrarily large
compared to $\nodecount(\thresh^*)$.
Even when insisting that $\fcost$ has a minimum edge increment $\sigma$,
the number of nodes would still grow exponentially fast as $\smash{\Nopt\branch^{c/\sigma}}$.
Algorithms that update the threshold heuristically without budget constraints are not protected against serious over-expansion and do not effectively control the number of node expansions in the worst case.

\section{\zoomer}

\begin{figure}[bt!]
\begin{lstlisting}[numbers=none,label=lst:zoomer,caption={The \zoomer\ algorithm.}]
def (*\zoomer*)(root):
  lower := $\cost$(root) # assumed > 0
  # $\nodecount_0$: number of expansions at the root
  # up_min: lower bound on upper
  res, $\nodecount_0$, _, up_min := (*\deepening*)(root, lower, $\infty$) (*\label{zoom:first_dfs}*)
  if res is a Node: return res
  for k := 1, 2, ...:
    upper := $\infty$

    while upper $\neq$ up_min:  (*\label{zoom:while}*)
      if upper == $\infty$:
        $\thresh$ := lower $\times$ 2 # sky is the limit   (*\label{zoom:double}*)
      else:
        $\thresh$ := (upper + lower) / 2  (*\label{zoom:half}*)
      $\thresh$ := max($\thresh$, up_min)

      res, _, $\threshdown$, $\threshup$ := (*\deepening*)(root, $\thresh$, $\nodecount_0 2^k$) (*\label{zoom:other_dfs}*)

      if res is a Node:
        return res # solution found
      elif res == "budget exceeded":
        upper := $\threshdown$ # reduce upper bound (*\label{zoom:reduce_upper}*)
      else: # within budget, no solution found
        lower := $\thresh$ # increase lower bound (*\label{zoom:increase_lower}*)
        up_min := $\threshup$  (*\label{zoom:set_up_min}*)
\end{lstlisting}
\end{figure}

The insight behind  \zoomer{} is that the risk of calling \deepening{} with a large threshold can be mitigated by limiting the number of expanded nodes in each iteration. The pseudocode is provided in \cref{lst:zoomer}.

\zoomer{} operates in iterations $k \in \Nonnegints$.
In the zeroth iteration it makes a single call to \deepening{} with an unlimited budget and threshold $\theta = \cost(\text{root})$ (\cref{zoom:first_dfs}). The number of nodes expanded by this search is denoted by $N_0$, which is usually quite small and by definition satisfies $N_0 \leq N^*$.
In subsequent iterations $k \in \Naturals$ the algorithm makes multiple calls to \deepening{} (\cref{zoom:other_dfs}), all with a budget of $N_0 2^k$,
to perform an exponential search on $\thresh$
(\cref{zoom:double,zoom:half})
to identify whether there exists a feasible solution within the budget (\cref{zoom:while}) \cite{bentley1976expsearch}.

Let $\nodecount_k$ be the total number of nodes expanded by \zoomer{} during iteration $k$, which includes multiple calls to \deepening.

\begin{theorem}\label{thm:zoomer}
Assuming $\fcost$(root) > 0, \zoomer{} returns an optimal solution
after expanding no more nodes than
\begin{align*}
    \sum_{k=0}^\infty \nodecount_k \leq \max\left\{1, 4\zoomfact\right\}\nodecount^*\,,
\end{align*}
where $\zoomfact := \ceiling{\log_2(\thresh^*/\thresh_0)} + \ceiling{\log_2(\thresh^*/\fgapmin)}\,.$ 
\end{theorem}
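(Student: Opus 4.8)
The plan is to prove correctness first, then bound the expansions by a geometric sum dominated by the final iteration. \emph{Correctness} follows from the optimality property of \deepening{} noted above: \zoomer{} only returns a node when some call to \deepening{} reported a solution \emph{within} its budget, and since a solution can lie inside the bound $\thresh$ only once $\thresh\ge\thresh^*$, the branch-and-bound bookkeeping makes that solution optimal. The substance of the counting argument is a set of invariants for the exponential search of iteration $k$, run with budget $B_k\defeq\nodecount_0 2^k$: (i) \texttt{lower} is always \emph{feasible}, $\nodecount(\texttt{lower})\le B_k$, since it is only assigned a threshold for which \deepening{} returned within budget; (ii) once finite, \texttt{upper} is \emph{infeasible}, $\nodecount(\texttt{upper})\ge B_k$, because on a budget overrun the reported $\threshdown$ is the largest cost among the $B_k$ expanded nodes, all of which then lie in $\nodeset(\threshdown)$; and (iii) \texttt{up\_min} equals the smallest node cost strictly above \texttt{lower}, as it is set to $\threshup=\min\{\cost(\node):\node\in\openset(\thresh)\}$. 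The clamp $\thresh\defeq\max(\thresh,\texttt{up\_min})$ then ensures every tested threshold is at least the next distinct cost, so no call is wasted on a plateau where $\nodecount$ is constant.

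Next I would locate the terminating iteration. Let $\kappa$ be the first $k$ with $B_k\ge\Nopt$; since $\nodecount_0\le\Nopt$ and $B_{\kappa-1}<\Nopt$ we get $B_\kappa<2\Nopt$. For termination, once $B_k\ge\Nopt$ the threshold $\thresh^*$ is feasible (as is every threshold in $[\thresh^*,\thresh^*+\fgapopt)$), so the search, which drives \texttt{lower} upward through node costs via the clamp, must eventually test such a threshold and find the solution; hence \zoomer{} stops at iteration $\kappa$. I would then bound the number of \deepening{} calls inside a single iteration by splitting the exponential search into its two phases. The \emph{doubling} phase at least doubles \texttt{lower} (which is always $\ge\thresh_0=\cost(\text{root})$) until the budget is first exceeded, which for $k<\kappa$ happens below $\thresh^*$ and for $k=\kappa$ coincides with finding the solution, so it uses at most $\ceiling{\log_2(\thresh^*/\thresh_0)}$ calls. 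The \emph{halving} phase bisects $[\texttt{lower},\texttt{upper}]$ until $\texttt{upper}=\texttt{up\_min}$; since this interval has width at most $\thresh^*$ and consecutive distinct node costs at or below $\thresh^*$ are at least $\fgapmin$ apart, it spans at most $\thresh^*/\fgapmin$ cost levels and bisecting over them costs at most $\ceiling{\log_2(\thresh^*/\fgapmin)}$ calls. Hence each iteration makes at most $\zoomfact$ calls to \deepening{}.

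Finally I would sum. Each \deepening{} call in iteration $k$ expands at most $B_k=\nodecount_0 2^k$ nodes, so $\nodecount_k\le\zoomfact\,\nodecount_0 2^k$ and
\begin{align*}
\sum_{k=0}^{\infty}\nodecount_k
  &= \nodecount_0 + \sum_{k=1}^{\kappa}\nodecount_k
   \le \nodecount_0 + \zoomfact\,\nodecount_0\sum_{k=1}^{\kappa}2^{k} \\
  &\le \zoomfact\,\nodecount_0\,2^{\kappa+1} < 4\zoomfact\Nopt\,,
\end{align*}
where the geometric sum is dominated by its last term, the bound $B_\kappa<2\Nopt$ is used, and the iteration-$0$ term is absorbed because $1-2\zoomfact\le0$ whenever $\zoomfact\ge\tfrac12$; the degenerate case $\zoomfact<\tfrac12$ forces the solution to appear already in iteration $0$ with $\nodecount_0=\Nopt$, covered by $\max\{1,\cdot\}$. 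The step I expect to be most delicate is invariant~(ii) together with the convergence of the halving phase: one must argue rigorously that a budget overrun certifies $\nodecount(\threshdown)\ge B_k$ (and that, in the boundary case $\nodecount(\threshdown)=B_k$, no solution of cost $\le\threshdown$ was missed), and that repeated bisection genuinely drives \texttt{upper} down to \texttt{up\_min} rather than stalling. This is exactly where the \fgaptxt{} lower bound $\fgapmin$ and the clamp to \texttt{up\_min} do the essential work.
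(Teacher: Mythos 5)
Your proposal follows the paper's proof plan step for step: the same feasible/infeasible invariants for \llower{} and \upper, the same doubling/halving decomposition with per-phase bounds $\ceiling{\log_2(\thresh^*/\thresh_0)}$ and $\ceiling{\log_2(\thresh^*/\fgapmin)}$, the same terminal iteration $K=\min\{k:\nodecount_0 2^k\geq\Nopt\}$ with $\nodecount_0 2^K<2\Nopt$, and the same geometric sum yielding $\max\{1,4\zoomfact\}\Nopt$. The genuine gap is that the step you explicitly defer as ``most delicate'' is the actual crux of the paper's proof: without it, neither the halving-phase count nor even the termination of iterations $k<K$ is established. What must be shown is that the while loop, whose exit condition is that \upper{} equals \texttt{up\_min}, exits once $\upper-\llower\leq\fgapmin$. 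The paper's argument: let $\thresh_k$ be the smallest node cost whose expansion count exceeds the budget $B_k=\nodecount_0 2^k$. An \outabudget{} return certifies at least $B_k+1$ visited nodes of cost at most $\threshdown$ (the $B_k$ expanded ones plus the node triggering the overrun), so $\threshdown$ is an \emph{actual node cost} with $\nodecount(\threshdown)>B_k$; hence \upper{} is always a node cost satisfying $\upper\geq\thresh_k$, while within-budget calls keep $\llower<\thresh_k$. For $k<K$ we have $\thresh_k\leq\thresh^*$, so once $\upper-\llower\leq\fgapmin$: no node cost lies in $(\thresh_k,\thresh_k+\fgap(\thresh_k))$, which forces $\upper=\thresh_k$, and no node cost at or below $\thresh^*$ can lie strictly between $\llower\geq\thresh_k-\fgapmin$ and $\thresh_k$ (it would violate the definition of $\fgapmin$), which forces \texttt{up\_min} $=\thresh_k$ as well; so the loop exits. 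This is precisely why the algorithm sets $\upper:=\threshdown$ rather than $\upper:=\thresh$: it makes \upper{} a node cost that can snap onto $\thresh_k$. With $\upper:=\thresh$ the bisection would have to resolve the distance between $\llower$ and $\thresh_k$, which is \emph{not} an \fgaptxt{} and can be arbitrarily small, and the theorem would fail---so this cannot be waved away with ``the clamp and $\fgapmin$ do the essential work.''

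A second, smaller inaccuracy: your termination mechanism for iteration $K$ (``the search drives \llower{} upward through node costs via the clamp'') is not the real one, and as stated it neither shows that a threshold in $[\thresh^*,\thresh^*+\fgapopt)$ is ever tested nor rules out the loop exiting without a solution. The paper's argument is again via the bracket: within-budget calls keep $\llower<\thresh^*$, overruns keep $\upper\geq\thresh_K\geq\thresh^*+\fgapopt$, so $\thresh^*$ always lies strictly inside $(\llower,\upper)$ (in particular \texttt{up\_min} $\leq\thresh^*<\upper$, so the exit condition cannot trigger prematurely), and once bisection brings $\upper-\llower\leq 2\fgapopt$ the clamped midpoint lies in $[\thresh^*,\thresh^*+\fgapopt)$, where \deepening{} completes within budget and returns an optimal solution. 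Once these two arguments are supplied, your invariants and counting go through and coincide with the paper's proof.
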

\begin{proof}

Let $B_k := \nodecount_0 2^k$ be the expansion budget at iteration $k$ and 
define the minimum cost $\thresh_k$ at which the budget is exceeded:
$\thresh_k := \min_{\node\in\nodeset} \{\fcost(\node):\nodecount(\fcost(\node)) > B_k\}\,,$ 
and let $K := \min\{k\in\Nonnegints: B_k \geq \nodecount^*\} = \ceiling{\log_2(\nodecount^*/\nodecount_0)}$
be the first iteration with enough budget to find a solution.

Observe that \deepening{} with budget $B_k$ ensures that when it returns 
with \outabudget{} along with the return value $\threshdown$,
if we call again $\deepening(\text{root}, \threshdown, B_k)$ it will also return with \outabudget{},
which means that $\thresh_k \leq \threshdown$ and so we always have $\upper \geq \thresh_k$ (\cref{zoom:reduce_upper}).
Similarly, the algorithm also ensures that we always have $\llower < \thresh_k$.

\paragraph{Iterations $k< K$}
Now, suppose that at some point we have $\upper - \llower \leq \fgapmin$.
Then we have $\upper \leq \llower + \fgapmin < \thresh_k + \fgap(\thresh_k)$
which by \cref{zoom:reduce_upper} implies that $\upper = \thresh_k$.
We also have $\llower \geq \upper - \fgapmin = \thresh_k - \fgapmin$
which by \cref{zoom:set_up_min} implies that \texttt{up\_min} $= \thresh_k=\upper$, which means that no solution exists in $[\llower, \upper]$
since $\llower + \fgap(\llower)=\texttt{up\_min}=\upper$.
Therefore, iteration $k<K$ terminates no later than when $\upper - \llower \leq \fgapmin$ 
 (\cref{zoom:while}).

For $a_k$ times the algorithm goes through \cref{zoom:double}
before calling \deepening,
and thus \llower{} is doubled until $\thresh=2 \times \llower \geq \thresh_k$
when \deepening{} returns with \outabudget,
at which point \upper{} is set to $\threshdown \leq \thresh$;
Together with 
$\llower < \thresh_k$ (by definition of $\thresh_k$),
this implies that $\upper - \llower \leq \thresh - \llower = \llower \leq \thresh_k$.
Thus, starting at worst from $\thresh_0$,
\begin{align*}
    a_k \leq \min\{d\in\Nonnegints:\thresh_0 2^d \geq \thresh_k \} 
    &= \ceiling{\log_2(\thresh_k/\thresh_0)} \\
    &\leq \ceiling{\log_2(\thresh^*/\thresh_0)} \,.
\end{align*}
Thereafter, for $b_k$ times
the algorithm goes through \cref{zoom:half}
before calling \deepening{} until  $\upper - \llower \leq \fgapmin$.
Hence
\begin{align*}
    b_k \leq \min\{d\in\Nonnegints:\thresh_k/2^d \leq \fgapmin\} &= \ceiling{\log_2(\thresh_k/\fgapmin)}  \\
    &\leq 
    \ceiling{\log_2(\thresh^*/\fgapmin)}\,.
\end{align*}

\paragraph{Iteration $K$}
For iteration $K$ things are slightly different.
We still have $\llower < \thresh^*$
(otherwise a solution would have already been found)
and $\upper \geq \thresh_K$.
Assume that at some point $\upper - \llower \leq  2\fgapopt$.
Since $\upper$ is set, the algorithm goes through \cref{zoom:half}
before calling \deepening{}.
Thus $\thresh = (\upper+\llower)/2 \leq \llower + \fgapopt < \thresh^*+\fgapopt$
and similarly $\thresh \geq \upper - \fgapopt \geq \thresh_K - \fgapopt \geq \thresh^*$ where the last inequality is because there is enough budget for 
$\thresh^*$ by definition of $K$ but not for $\thresh_K$.
This implies that $\thresh \in [\thresh^*, \thresh^*+\fgap^*)$
and thus the call to \deepening{} returns with an optimal solution within budget.

Starting from the lower bound $\thresh_0$, the number of calls to \deepening{} 
after going through \cref{zoom:double}
before $\thresh \geq \thresh^*$ is less than $\ceiling{\log_2(\thresh^*/\thresh_0)}$.
At this point, either $\thresh \in [\thresh^*, \thresh_K)$
and an optimal solution is returned,
or $\thresh \geq \thresh_K$.
In the latter case, $\upper := \threshdown \leq \thresh = 2\llower$,
and since $\llower < \thresh^*$ we have $\upper - \llower < \thresh^*$.
Subsequently the number of calls to \deepening{}
after going through \cref{zoom:half} is at most
\begin{align*}
    \min\{d\in\Nonnegints:{} \thresh^*/2^d \leq 2\fgapopt\}
    &\leq\ceilingpos{\log_2( \thresh^* / (2\fgapopt) )}
    \\
    &\leq  \ceilingpos{\log_2( \thresh^* / \fgapmin )} \,.
\end{align*}

Therefore, over all iterations $k \leq K$ and including the first call to \deepening{},
when $\zoomfact \geq 1$
the number of node expansions is bounded by
\begin{equation*}
    \nodecount_0 + \sum_{k=1}^K (a_k + b_k) B_k 
    \leq \sum_{k=0}^K \zoomfact B_k
    \leq 2\zoomfact \nodecount_0 2^K 
    \leq 4\zoomfact \nodecount^*\,.
    \qedhere
\end{equation*}

\end{proof}

\begin{remark}
Because the ordering of the nodes expanded by \deepening{} is invariant under translation of the cost function $\cost$, if $\fcost(\texttt{root})$ is close to zero or negative, the algorithm can simply be run with cost function $f' = f +1 - \cost(\texttt{root})$.
If $\thresh^*$ is known up to constant factors, then the bound can be improved by translating costs so that $\thresh^* / \thresh_0 = O(1)$.
\end{remark}
\begin{remark}
Consider a class of search trees with increasing solution depth $d$ and suppose that $\thresh^* = O(d)$ and $\fgapmin$ is constant, both of which are typical.
When the tree has polynomial growth, then $\Nopt = O(d^p)$ and
\zoomer{} expands a factor of $O(\log \Nopt)$ more nodes than $\Nopt$. In exponential domains with $\Nopt = O(b^d)$, \zoomer{} expands a factor of $O(\log \log \Nopt)$ more nodes than $\Nopt$.
\end{remark}

\section{Uniform Doubling Scheduling}

Iterations $k < K$ in \zoomer{} can take a long time to terminate when $\fgapmin$ is small, where $K$
is the first iteration of \zoomer\ with enough budget to find a solution. This can be mitigated by interleaving the iterations in a careful manner.
We introduce the Uniform Doubling Scheduler (UDS), which provides the correct interleaving and may be of independent interest.
UDS is inspired by Luby's scheduling algorithm for speeding up randomized algorithms \cite{luby1993speedup}.
Suppose you have access to a countably infinite number of deterministic programs indexed by $k \in \Nonnegints$ with unknown running times $(T_k)_{k=0}^\infty$.
UDS operates in blocks $j \in \Naturals$. In the $j$th block it runs program $k_j$ for $\magicshort(j)$ computation steps where $k_j = \log_2 \magicshort(j)$ and $\magicshort(j)$ is the $j$th value of integer sequence A6519, which are all powers of $2$
(see \cref{tab:a6519} and \cref{lst:scheduling}). Note, the state of programs that are not running in the current block are stored in memory.

\begin{theorem}\label{thm:uds}
The total number of computation steps before any program $k$ halts is at most $(4 + k + \log_2(n)) n 2^k$, where $n = \ceiling{T_k/2^k}$.
\end{theorem}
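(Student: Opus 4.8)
The plan is to first pin down the schedule explicitly. By the defining property of the integer sequence A6519, $A(j)$ is the largest power of two dividing $j$, so writing $v(j) := \log_2 A(j) = k_j$ for the $2$-adic valuation, program $k$ is run in exactly those blocks $j$ with $v(j) = k$, i.e.\ $j \in \{(2m-1)2^k : m \in \Naturals\}$, and each such block advances it by $A(j) = 2^k$ steps. Hence after $m$ of these blocks program $k$ has received $m 2^k$ steps, so it halts during its $n$-th run with $n = \ceiling{T_k/2^k}$, which takes place in block $J := (2n-1)2^k$. Consequently the number of steps executed before program $k$ halts is at most $S(J) := \sum_{j=1}^{J} A(j)$, and the whole theorem reduces to bounding this partial sum.

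Next I would evaluate $S(J)$ by grouping blocks according to their level $i = v(j)$. For each $i$, the contribution is $2^i$ times the number of odd multiples of $2^i$ in $[1,J]$. With $J = (2n-1)2^k$ this count is $(2n-1)2^{k-i-1}$ for $i<k$ and equals $n$ for $i=k$, so the $k$ levels below $k$ each contribute exactly $(2n-1)2^{k-1} < n2^k$, and level $k$ contributes $n2^k$. For the levels above $k$ I would use the clean self-similar identity: the blocks $j \le J$ with $v(j) > k$ are precisely $j = 2^{k+1} j'$ with $j' \le \floor{(2n-1)/2} = n-1$, and $A(j) = 2^{k+1} A(j')$, so their total contribution is $2^{k+1} S(n-1)$.

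To close the recursion I need a general bound on $S(N)$. The plan is to bound the number of odd multiples of $2^i$ in $[1,N]$ by $(N/2^i + 1)/2$ and sum $2^i$ times this over $0 \le i \le \floor{\log_2 N}$; the $N/2$ terms give roughly $\half N \log_2 N$ and the geometric $2^i/2$ tail adds $O(N)$, yielding $S(N) \le \half N(\log_2 N + 3)$. Feeding $N = n-1$ into this gives $2^{k+1}S(n-1) \le 2^k n(\log_2 n + 3)$, and adding the three pieces gives $S(J) < n2^k(k + 1 + (\log_2 n + 3)) = (4 + k + \log_2 n) n 2^k$, as claimed.

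The delicate point is getting the constant exactly right. The naive route of enlarging $J$ to the next power of two and quoting $S(2^p) = 2^{p-1}(p+2)$ loses a factor of about two and misses the stated bound; the decomposition must instead exploit the ``uniform doubling'' structure directly --- that every level $i \le k$ does essentially the same amount $\approx n2^k$ of work while the higher levels decay geometrically --- so the main obstacle is performing the level-by-level accounting together with the auxiliary $S(N)$ estimate tightly enough that the additive constant lands at exactly $4$.
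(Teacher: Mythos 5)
Your argument is correct, and its first half coincides with the paper's: like the paper (via \cref{lem:sched_index}), you identify $\magicshort(j)$ with the largest power of two dividing $j$, conclude that program $k$ halts during block $J=(2n-1)2^k$ with $n=\ceiling{T_k/2^k}$, and reduce the theorem to bounding $S(J)=\sum_{i=1}^{J}\magicshort(i)$. Where you diverge is in how this sum is bounded. The paper proves the single uniform estimate $S(j)\le\tfrac{j}{2}\left(3+\floor{\log_2 j}\right)$ (\cref{lem:magic_sum}, obtained exactly as your auxiliary bound: at each level $i$ the number of odd multiples of $2^i$ in $[1,j]$ is at most $(j/2^i+1)/2$), and then simply substitutes $J/2\le n2^k$ and $\floor{\log_2 J}\le k+1+\floor{\log_2 n}$, which already lands on $(4+k+\floor{\log_2 n})\,n2^k$. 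You instead perform a three-way decomposition---exact counts $(2n-1)2^{k-1}$ per level $i<k$, exactly $n2^k$ at level $k$, and the self-similar identity $2^{k+1}S(n-1)$ for the levels above $k$---and invoke the uniform estimate only for $S(n-1)$. Both routes are valid and yield the same constant; yours is marginally tighter but the extra structure is unnecessary. Your closing worry is therefore misplaced: the ``naive route'' that loses a factor of two is only the specific one you describe (rounding $J$ up to a power of two and quoting the exact formula $S(2^p)=2^{p-1}(p+2)$); applying the general bound at $j=J$ itself loses nothing, and this is precisely what the paper does---indeed it gives $\floor{\log_2 n}$ in place of your $\log_2 n$.
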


Provided that program $k$ is not expected to run in time much less than $2^k$, then \cref{thm:uds} shows that the overhead incurred by UDS is not especially severe. The proof of \cref{thm:uds}, provided below, depends on two lemmas.

\begin{table}
\centering
\setlength\tabcolsep{3pt}\small
\begin{tabular}{|rrrrrrrrrrrrrrrrr|} \hline
             $j = $   &1 &2 &3 &4 &5 &6 &7 &8 &9 &10 &11 &12 &13 &14 &15 &16\,\,$\cdots$ \\
  $\magicshort(j)=$   &1 &2 &1 &4 &1 &2 &1 &8 &1 &2 &1 &4 &1 &2 &1 &16\,\,$\cdots$ \\
            $k_j= $   &0 &1 &0 &2 &0 &1 &0 &3 &0 &1 &0 &2 &0 &1 &0 &4\,\,$\cdots$ \\ \hline
\end{tabular}
\caption{Sequence A6519: \url{https://oeis.org/A006519}.}\label{tab:a6519}
\end{table}

\begin{lemma}[\citeauthor{orseau2018single}, \citeyear{orseau2018single}, Lemma 10]\label{lem:sched_index}
For all $j\in\Naturals$ and $k\in\Nonnegints$:
$\magicshort(j) = 2^k
\Leftrightarrow
(\exists! n\in\Naturals: j=(2n-1)2^k)$.
\end{lemma}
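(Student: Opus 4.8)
The plan is to reduce the statement to the familiar fact that every positive integer factors uniquely as an odd number times a power of two. First I would recall the explicit form of the sequence: $\magic$ assigns to each $j\in\Naturals$ the highest power of two dividing $j$, that is, $\magicshort(j)=2^{v_2(j)}$ where $v_2(j)$ is the $2$-adic valuation of $j$. Hence $\magicshort(j)=2^k$ holds precisely when $2^k \mid j$ but $2^{k+1}\nmid j$. If one prefers to work from the self-similar pattern visible in \cref{tab:a6519} rather than taking this as the definition, it follows by a short strong induction on $j$ splitting on parity: odd $j$ give $\magicshort(j)=1=2^{v_2(j)}$, while for $j=2i$ the recurrence $\magicshort(2i)=2\,\magicshort(i)$ gives $2\cdot 2^{v_2(i)}=2^{v_2(i)+1}=2^{v_2(j)}$. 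With this characterization the lemma becomes the assertion that $2^k$ exactly divides $j$ if and only if $j$ is an odd multiple of $2^k$.

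Next I would record the elementary bijection underlying the quantifier: the map $n \mapsto 2n-1$ sends $\Naturals$ bijectively onto the positive odd integers (using that $\Naturals$ starts at $1$, so the smallest value is $2\cdot 1-1=1$). Consequently $n\mapsto (2n-1)2^k$ is injective with image exactly the odd multiples of $2^k$. In particular at most one $n$ can ever satisfy $j=(2n-1)2^k$, so the uniqueness inside $\exists!$ is automatic, and the right-hand side of the equivalence is equivalent to the bare existence statement $\exists\,n\in\Naturals:\ j=(2n-1)2^k$.

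For the forward direction I would assume $\magicshort(j)=2^k$ and write $j=2^k m$ with $m\in\Naturals$; since $2^{k+1}\nmid j$, the factor $m$ is odd, hence $m=2n-1$ for the unique $n$ supplied by the bijection, giving $j=(2n-1)2^k$. For the converse I would assume such an $n$ exists and read off that $2^k\mid j$, while $2n-1$ being odd forces $2^{k+1}\nmid j$, so $v_2(j)=k$ and therefore $\magicshort(j)=2^k$. Combining the two directions with the uniqueness observation from the previous step closes the equivalence.

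The only real obstacle here is bookkeeping rather than mathematics: one must be careful that $\Naturals$ begins at $1$ so that $2n-1$ enumerates the odd numbers starting from $1$ (matching $2^k\cdot 1 = 2^k$ as the smallest odd multiple), and that $\exists!$ is not claiming anything stronger than existence once injectivity of $n\mapsto(2n-1)2^k$ is noted. If instead a fully self-contained argument from the recurrence generating \cref{tab:a6519} is demanded, the induction establishing $\magicshort(j)=2^{v_2(j)}$ is the single step that needs genuine care; everything afterwards is the standard extraction of the largest power of two.
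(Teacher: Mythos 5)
Your proposal is correct, but there is nothing in the paper to compare it against: the paper does not prove this lemma at all. It is imported as a citation (Orseau and Lelis, 2018, Lemma~10), and the present paper only uses it as a black box inside the proofs of \cref{lem:magic_sum} and \cref{thm:uds}. Your argument fills that gap with the standard proof: identify $\magicshort(j)$ with $2^{v_2(j)}$, the largest power of two dividing $j$ --- which is exactly what the paper's bit-trick definition $\magicshort(j) = ((j \oplus (j-1))+1)/2$ in \cref{lst:scheduling} computes, and what OEIS A006519 is --- and then invoke the unique factorization of a positive integer as an odd number times a power of two. Both directions are handled soundly, and your observation that the $\exists!$ is automatic once $n \mapsto (2n-1)2^k$ is seen to be injective is the right way to dispose of the uniqueness claim. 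The one step that genuinely needs justification, as you note yourself, is the identification $\magicshort(j) = 2^{v_2(j)}$: the paper presents A6519 only through \cref{tab:a6519} and the XOR formula, so either the formula must be unwound (the bits of $j \oplus (j-1)$ are precisely the lowest set bit of $j$ and everything below it, so $((j\oplus(j-1))+1)/2$ is that lowest set bit) or one argues by your parity induction $\magicshort(2i) = 2\,\magicshort(i)$, $\magicshort(\text{odd}) = 1$. Either version closes the argument; your write-up is complete and self-contained where the paper simply defers to prior work.
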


\begin{lemma}\label{lem:magic_sum}
$\sum_{i=1}^j \magicshort(i) \leq \frac{j}{2}\left(3+\floor{\log_2 j}\right)$ for all $j \in \Naturals$.
\end{lemma}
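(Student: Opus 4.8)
The plan is to evaluate the sum by grouping the indices $i \in \{1,\ldots,j\}$ according to the common value of $\magicshort(i)$. By \cref{lem:sched_index}, $\magicshort(i) = 2^k$ holds exactly when $i$ is an odd multiple of $2^k$, so writing $c_k$ for the number of odd multiples of $2^k$ that do not exceed $j$, we have $\sum_{i=1}^j \magicshort(i) = \sum_{k \geq 0} 2^k c_k$. Since the smallest odd multiple of $2^k$ is $2^k$ itself, the terms with $2^k > j$ vanish, leaving only the indices $k = 0, 1, \ldots, \floor{\log_2 j}$, that is, $\floor{\log_2 j} + 1$ nonzero summands.

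Next I would bound each $c_k$. The odd multiples of $2^k$ up to $j$ are in bijection with the odd integers in $\{1, \ldots, \floor{j/2^k}\}$, of which there are $\ceiling{\floor{j/2^k}/2} \leq (j/2^k + 1)/2$. Hence $2^k c_k \leq (j + 2^k)/2$, and summing this over the $\floor{\log_2 j}+1$ relevant values of $k$ separates into a linear part contributing $\tfrac{j}{2}(\floor{\log_2 j}+1)$ and a geometric part contributing $\tfrac12 \sum_{k=0}^{\floor{\log_2 j}} 2^k = \tfrac12(2^{\floor{\log_2 j}+1} - 1)$.

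The final step is to show the geometric part is at most $j$, which is where the stated constant $3$ comes from. This follows from the defining inequality $2^{\floor{\log_2 j}} \leq j$, so $2^{\floor{\log_2 j}+1} - 1 \leq 2j - 1$ and thus $\tfrac12(2^{\floor{\log_2 j}+1}-1) \leq j - \tfrac12 \leq j$. Adding the two contributions yields $\sum_{i=1}^j \magicshort(i) \leq \tfrac{j}{2}(\floor{\log_2 j}+1) + j = \tfrac{j}{2}(3 + \floor{\log_2 j})$, as required.

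The only delicate point—really the main obstacle—is keeping the floor and ceiling roundings under control tightly enough to recover the exact constant $3$ rather than a looser bound: the ceiling in the count of odd integers and the $-1$ in the geometric sum must each be discarded in the correct direction, and the crux is the elementary but essential estimate $2^{\floor{\log_2 j}+1} \leq 2j$. As a cross-check I would set $S(j) \defeq \sum_{i=1}^j \magicshort(i)$ and exploit the self-similarity of A6519 (odd-indexed terms equal $1$, and $\magicshort(2m) = 2\magicshort(m)$) to obtain the recurrences $S(2m) = m + 2S(m)$ and $S(2m+1) = m+1+2S(m)$, then verify the claim by strong induction on $j$; this route reproduces the same constant and confirms that the bound is in fact tight in the even case.
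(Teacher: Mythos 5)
Your proposal is correct and takes essentially the same route as the paper's proof: both use \cref{lem:sched_index} to group the indices by their common value $2^k$, count $\floor{\left(j/2^k+1\right)/2}$ such indices for each $k \le \floor{\log_2 j}$, and absorb the resulting geometric part ($\le j$) into the constant $3$. Your write-up simply makes explicit the final arithmetic that the paper leaves implicit, and adds an optional recurrence-based cross-check.
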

\begin{proof}
By \cref{lem:sched_index}, the number of blocks program $k$ has been run for after the scheduler has run $j$ blocks is $n = \max\{n\in\Naturals : (2n-1)2^k \leq j\} = \floor{\left(j/2^k + 1\right)/2}$.
Furthermore, the program of largest $k$ that could run up to $j$ is $\max\{k\in\Nonnegints:2^k \leq j\} = \floor{\log_2 j}$.
That is, at block $j$, exactly $1+\floor{\log_2 j}$ programs have computed for at least one step.
Therefore the total number of computation steps over all started programs after block $j$ is
\begin{equation*}
    \sum_{i=1}^j \magicshort(i) = \sum_{k=0}^{\floor{\log_2 j}} \floor{\frac{j/2^k + 1}{2}}2^k
    \leq \frac{j}{2}\left(3+\floor{\log_2 j}\right)\,. \qedhere
\end{equation*}
\end{proof}

\begin{proof}[Proof of \cref{thm:uds}]
UDS always runs program $k$ in blocks of length $2^k$. By \cref{lem:sched_index}, program $k$ will be run for the $n$th block once $j = (2n - 1) 2^k$. Therefore by \cref{lem:magic_sum}, the total computation before program $k$ halts
is at most
\begin{equation*}
\sum_{i=1}^j \magicshort(i) \leq \frac{j}{2}(3 + \floor{\log_2(j)})
\leq (4 + k + \floor{\log_2(n)}) n 2^k\,. \qedhere
\end{equation*}
\end{proof}

\begin{figure}[bt!]
\begin{lstlisting}[numbers=none,label=lst:scheduling,caption={The Uniform Doubling Scheduling algorithm.}]
def (*\magic*)(j):
  return ((j XOR (j - 1)) + 1) / 2

def uniform_doubling_scheduling(prog):
  states := [] # growable vector
  for j := 1, 2, ...:
    k := $\log_2(\magic($j$))$ # exact
    if k >= len(states):
      state[k] := make_state(states, k)

    state[k] := run(prog(states, k), $2^k$)
    if is_goal(state[k]):
      return state[k]
\end{lstlisting}
\end{figure}


\section{\zzzlong{}}

Using UDS to interleave the iterations of \zoomer{} allows us to replace $\fgapmin$ with $\fgapopt$ in the analysis.
The theoretical price for the improvement is at most a logarithmic factor, with the worst case when $\fgapopt = \fgapmin$. See \cref{lst:zzz} (an optimized version is provided in the supplementary material). The following theorem follows from the analysis in the previous two sections.

\begin{theorem}\label{thm:zzz}
Assuming $\thresh_0 \defeq \cost(\texttt{root})>0$,
\cref{lst:zzz} ensures that an optimal solution is found within a number of node expansions bounded by
\begin{align*}
\sum_{k=0}^{\infty}& \nodecount_k \leq
\nodecount_0 +
2(4 +\ceiling{\log_2 (\Nopt/\nodecount_0)}+ \log_2 (\zzzfact))\zzzfact\Nopt\,,
\end{align*}
where $\zzzfact := \ceiling{\log_2(\thresh^*/\thresh_0)} + \ceilingpos{\log_2(\thresh^*/(2\fgapopt))}$.
\end{theorem}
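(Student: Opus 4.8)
The plan is to recast \zzz{} as a direct instance of the scheduler analysed in \cref{thm:uds} and then feed the per-iteration quantities from the proof of \cref{thm:zoomer} into that bound. I would identify program $k$ of UDS with iteration $k\geq 1$ of \zoomer{}'s exponential search, running one call to \deepening{} (of budget $B_k\defeq\nodecount_0 2^k$) per scheduler block. Rescaling the step unit so that one computation step equals $\nodecount_0$ node expansions makes the UDS block length $2^k$ coincide exactly with the budget $B_k$, so that the number of blocks program $k$ needs before it halts equals its number of \deepening{} calls. The single unlimited-budget call at the root (iteration $0$) is run once up front and contributes the additive $\nodecount_0$; all the remaining binary-search iterations are interleaved by UDS.

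Next I would pin down which program halts with the solution. As in the proof of \cref{thm:zoomer}, let $K\defeq\ceiling{\log_2(\Nopt/\nodecount_0)}$ be the first iteration whose budget satisfies $B_K\geq\Nopt$. A program $k<K$ can never return a solution in the worst case: every goal has cost at least $\thresh^*$, so reaching it forces \deepening{} to expand all $\nodecount(\thresh^*)=\Nopt$ nodes before the goal, exceeding the budget $B_k<\Nopt$. Hence program $K$ is the one that halts, and the quantity $n=\ceiling{T_K/2^K}$ appearing in \cref{thm:uds} is at most the number of \deepening{} calls made in iteration $K$, namely $a_K+b_K$.

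To bound $n$ I would reuse the ``Iteration $K$'' part of the proof of \cref{thm:zoomer} essentially verbatim: the doubling phase finishes after $a_K\leq\ceiling{\log_2(\thresh^*/\thresh_0)}$ calls, and the halving phase returns an optimal solution as soon as $\upper-\llower\leq 2\fgapopt$, so $b_K\leq\ceilingpos{\log_2(\thresh^*/(2\fgapopt))}$. Adding these gives $n\leq a_K+b_K\leq\zzzfact$. Plugging $k=K$ and $n\leq\zzzfact$ into \cref{thm:uds}, multiplying by $\nodecount_0$ to convert rescaled steps back into node expansions, and using $\nodecount_0 2^K<2\Nopt$ together with $K=\ceiling{\log_2(\Nopt/\nodecount_0)}$ yields the bound $(4+K+\log_2\zzzfact)\,\zzzfact\,\nodecount_0 2^K<2(4+\ceiling{\log_2(\Nopt/\nodecount_0)}+\log_2\zzzfact)\zzzfact\Nopt$; adding the initial $\nodecount_0$ recovers the claimed expression.

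The crux, and the reason the result is worth stating separately, is the disappearance of $\fgapmin$. In sequential \zoomer{} the slow iterations $k<K$ run to completion, each costing up to $\ceiling{\log_2(\thresh^*/\fgapmin)}$ halving steps; under UDS these programs are \emph{never} finished, since program $K$ succeeds after only $n\leq\zzzfact$ blocks and the scheduler charges all the partial, interleaved work of the programs $k\leq K$ to the single bound of \cref{thm:uds}. The only place a gap enters is iteration $K$'s termination at $\upper-\llower\leq 2\fgapopt$, which is exactly why $\fgapopt$ survives. I expect the genuinely delicate step to be getting the block/step correspondence and the rescaling precisely right, so that $n\leq a_K+b_K$ and the factor $\nodecount_0 2^K<2\Nopt$ line up cleanly with \cref{thm:uds}; the remaining constant bookkeeping (the $\ceilingpos{\cdot}$ and the factor of $2$) is routine.
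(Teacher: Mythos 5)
Your proposal follows essentially the same route as the paper's own proof: interleave the iterations via UDS with macro-steps of $\nodecount_0$ node expansions so that one block of program $k$ is one \deepening{} call of budget $\nodecount_0 2^k$, bound the number of blocks of program $K=\ceiling{\log_2(\Nopt/\nodecount_0)}$ by $\zzzfact$ by reusing the iteration-$K$ analysis from \cref{thm:zoomer}, then apply \cref{thm:uds} with $n\leq\zzzfact$, convert units, and add $\nodecount_0$. One minor caveat: your intermediate claim that a program $k<K$ can \emph{never} return a solution is false in general (a branch-and-bound \deepening{} call can complete within a budget smaller than $\Nopt$, e.g.\ when optimal goals have many equal-cost descendants that are never expanded), but this is harmless since the bound only requires that the scheduler stops no later than when program $K$ halts—which is exactly how the paper phrases it.
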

\begin{proof}
\cref{lst:zzz} terminates at the latest when
program $K = \ceiling{\log_2 (\Nopt/\nodecount_0)}$ terminates,
since $2^K\nodecount_0 \geq \Nopt$.
From the proof of \cref{thm:zoomer} for the last iteration $K$ (only),
we know that
the number of calls to \deepening{}
after going through \cref{zzz:double}
before $\thresh \geq \thresh^*$ is less than $\ceiling{\log_2(\thresh^*/\thresh_0)}$.
Subsequently the number of calls to \deepening{}
after going through \cref{zzz:half} is at most
$\ceilingpos{\log_2( \thresh^* / (2\fgapopt) )}$.
Hence the number $n_K$ of blocks of size $2^K$ macro steps (each of $\nodecount_0$ steps) performed by program $K$ before terminating
is bound by $n_K \leq \zzzfact$.
Thus, applying \cref{thm:uds} with $n=n_K$ and $k= K$
gives that the number of macro steps performed by the scheduler is
at most
\begin{multline*}
    (4 + K + \floor{\log_2(n_K)})n_K2^K \\
    \leq 2(4 + \ceiling{\log_2(\Nopt/\nodecount_0)} + \floor{\log_2 \zzzfact})\zzzfact \Nopt/\nodecount_0\,.
\end{multline*}
Multiplying by the number $\nodecount_0$ of steps per macro step
and adding the first call to \deepening{} (\cref{zzz:first_dfs})
leads to the result.
\end{proof}

\begin{figure}[bt!]
\begin{lstlisting}[label=lst:zzz,caption={The \zzz\ algorithm (simple version). $\cost$(root) is assumed strictly positive.},numbers=none]
def (*\zzzlong*)(root):
  lower := [$\cost$(root)] # growable vector
  upper := [] # growable vector
  res, $\nodecount_0$, _, _ := (*\deepening*)(root, lower, $+\infty$) (*\label{zzz:first_dfs}*) #(*\thelstnumber*)
  if res is a Node: return res
  for j := 1, 2, ...:
    k := $\log_2(\magic($j$))$ # exact
    if k >= len(lower):
      lower[k] := $\cost$(root)
    if k >= len(upper):
      $\thresh$ := 2 $\times$ lower[k]  (*\label{zzz:double}*)
    else:
      $\thresh$ := (upper[k] + lower[k]) / 2  (*\label{zzz:half}*)
    res, _, _, _ := (*\deepening*)(root, $\thresh$, $\nodecount_0 2^k$)
    if res is a Node:
      return res # solution found
    elif res == "budget exceeded":
      upper[k] := $\thresh$ # set or reduce upper bound
    else:
      lower[k] := $\thresh$ # increase lower bound
\end{lstlisting}
\end{figure}

\begin{remark}
Sometimes an iteration $k > K$ terminates earlier than iteration $K$. The analysis can be improved slightly to reflect this, as we discuss in the supplementary material.
\end{remark}

\begin{remark}
\cref{lst:zzz} is simple but a little wasteful.
See the supplementary material for an optimized version
that terminates iterations $k<K$ for which the budget is provably insufficient.
\end{remark}

\section{Experiments}\label{sec:experiments}

\begin{figure*}[tb!]
    \centering
    \includegraphics[width=0.255\textwidth]{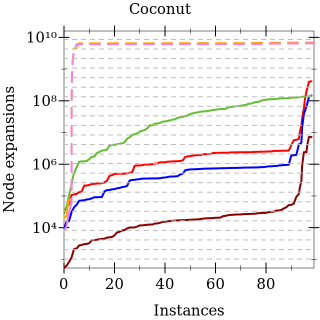}
    \includegraphics[width=0.24\textwidth]{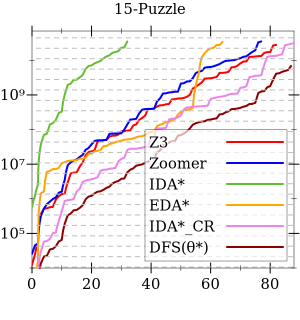}
    \includegraphics[width=0.24\textwidth]{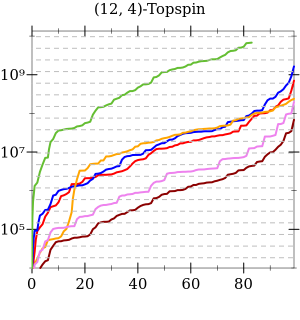}
    \includegraphics[width=0.24\textwidth]{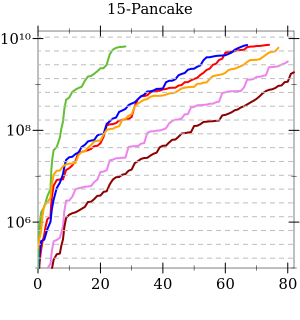}
    \includegraphics[width=0.255\textwidth]{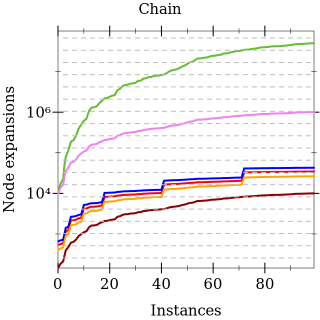}
    \includegraphics[width=0.24\textwidth]{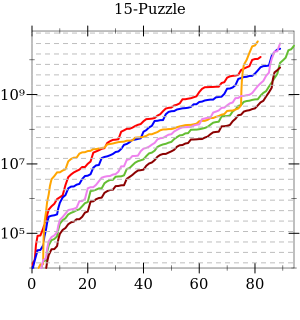}
    \includegraphics[width=0.24\textwidth]{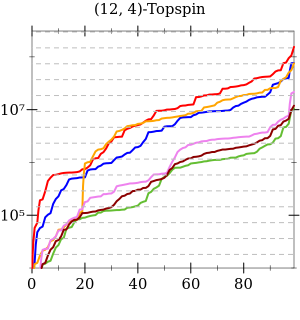}
    \includegraphics[width=0.24\textwidth]{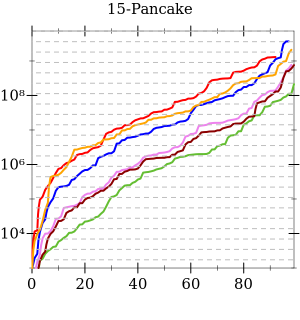}
    \caption{Profiles of node expansions on log scale for 100 instances on four different domains.
    Values are sorted for each solver individually.
    Lines that terminate early mean the time limit is reached for all subsequent instances.
    Dashed gray lines are powers of 2.
    Other dashed lines, when displayed, show the number of node expansions
    when the algorithm is terminated before a solution is found.
    The 3 r.h.s. plots are with random-costs operators on the first row,
    and with unit-costs operators on the second row.
    }
    \label{fig:results}
\end{figure*}
\paragraph{Algorithms tested} We test \ida~\cite{korf1985ida}, \zoomer, \zzz{} (optimized), \eda{}~\cite{sharon2014eda}, and \idacr{}~\cite{sarkar1991reducing}.
\eda{}($\gamma$) is a variant of \ida{} that repeatedly calls \deepening{} with unlimited budget and increasing thresholds. In the $k$th iteration it uses a threshold of $\gamma^k$ where $\gamma > 1$.
In our experiments we take $\gamma = 2$.
In a given iteration, \idacr{} collects the costs of nodes in the fringe into buckets. It then selects the cost for its next iteration based on the information stored in the buckets. The cost is chosen such that
\idacr{} is likely to
expand more than $b^k$ nodes in the $k+1$-th iteration. Similarly to \eda{}, we set $b=2$.
%
We also present the results of \deepening($\theta^*$), which is \deepening\ with its initial bound set to the optimal cost $\theta^*$. Our goal is to verify to how close each algorithm is from \deepening($\theta^*$). For a given heuristic function and without duplicate detection, the number of nodes expanded by \deepening($\theta^*$) is a lower bound on the number of nodes expanded by the algorithms we evaluate.
%

\paragraph{Problem domains}
We use three traditional problem domains in our experiment: 15-Puzzle~\cite{PSS07,DM66}, (12, 4)-TopSpin~\cite{Lam89}, and the 15-Pancake puzzle~\cite{Dwe77}. All these domains were implemented in PSVN~\cite{holte-tr2014}. We use pattern database heuristics (PDBs)~\cite{CulbersonSchaeffer1996} for all three domains. The PDBs for the 15-Puzzle and (12, 4)-TopSpin are generated by projecting tiles 6--15 and tokens 8--12, respectively. The PDB for the 15-Pancake is generated with domain abstraction by mapping pancakes 2--8 to pancake 1 (see the PSVN manual).
We show results both when operator costs are unitary, as usual,
and also when they are chosen uniformly at random in $[1..10000]$ in order
to test the robustness of the algorithms.
We also include the Chain problem, where the search tree is a simple chain (branching factor is 1) and the solution is placed at a depth chosen uniformly at random between 1 and 10000.

The branching factor of the search tree of the domains described above does not vary much. Moreover, all three domains have a relatively large density of solutions; there is a solution in any subtree of the search tree. These properties make the problem of overshooting less pronounced. In order to evaluate the robustness of the search algorithms, we introduce the Coconut problem, which is a domain with varied branching factor and small solution density.
In the Coconut problem we set the heuristic $\hcost$ to 0 and define $\cost = \gcost$ as follows.
The root has $\fcost$ cost 1.
There are $\branch$ coconut trees, and only one has a coconut in its branches.
The agent must climb the trees and find the coconut.
The $\branch$ trunks all have the same size $D$,
sampled uniformly in $[1..10000]$.
Moving along one trunk (using the same action repeatedly) costs 1, jumping from one trunk to another
costs $2D$.
After the trunk come the branches, where each branch is a $\branch$-ary tree.
Moving along the branches or jumping from one branch to another costs $1/10$.
The depth of the coconut in the branches is sampled from a geometric distribution with parameter $1/4$.

\paragraph{Experimental setup}
We use 100 instances for each problem domain and report the number of nodes expanded and number of problems solved with a time limit of 5 hours for the PSVN domains, 1 hour for the Coconut problem and no limit for the Chain problem.
We use the number of nodes expanded instead of running time because all search algorithms expand roughly the same number of nodes per second. The results are shown in \cref{fig:results}, where the $y$-axis shows the number of nodes expanded in log scale with the dashed lines being powers of 2. The $x$-axis depicts the instances from easiest to hardest. To ease visualization, the instances in the $x$-axis are sorted independently for each algorithm. The curves are thus not directly comparable for a particular instance, but they allow one to compare the growth in difficulty for the various solvers.

\paragraph{Discussion}
Only \zoomer\ and \zzz\ perform reliably across all domains.
\ida{} performs very well on domains with unit costs and a branching factor greater than 1, sometimes terminating before $\deepening(\thresh^*)$
because it stops as soon as it finds a solution.
But due to its conservative selection of the threshold, it also expands too many nodes on the other domains.
\idacr\ performs best on the 15-Puzzle, (12, 4)-Topspin and 15-Pancake
with random operator costs.
On these same problems, \eda{}, \zoomer{} and \zzz{} perform similarly.
However, both \idacr{} and \eda{} perform very poorly on the Coconut problem, each of them solving only 4 instances, and are outperformed even by \ida{}.


To explain the behaviour of \idacr\ and \eda\ on the Coconut problem, consider an instance
where the depth of the trunk is $D=2690$ and the depth of the coconut in the branches is $q=6$. The cost set by \eda(2) in the last iteration is 4096, resulting in a search tree with approximately $3^{(4096-2690)/0.1}\approx 10^{6700}$ nodes.
\idacr{} is not saved by choosing a threshold in the range of costs observed in the fringe: Due to the large cost for jumping from one tree to the other,
the threshold in the last iteration can be as large as $2\times 2689$.
This is not a carefully selected example. On the contrary, care must be taken to choose a Coconut problem for which these algorithms work.
Even \eda(1.01) would be only marginally better.
For comparison, \astar{} and \ida{} expand about $10^3$ and $10^6$ nodes respectively on this instance.


By contrast, \zoomer{} and \zzz{} perform both well on the Coconut problem: even though they require a non-negligible factor compared to $\deepening(\thresh^*)$, it appears this factor is independent of the difficulty of the instance, in line with the theoretical results.
\zoomer{} performs slightly better than \zzz\ because the
gaps $\{ \delta(\cost(n)) : n \in \nodeset\}$ are relatively large.

\section{Conclusion}

We derived two linear-memory heuristic search algorithms that require no parameter tuning and come with guarantees on the number of node expansions in the worst case relative to \astar.
\zoomer{} is guaranteed to find optimal solutions in $O(\Nopt\log(\thresh^*/\fgapmin))$ node expansions,
where $\thresh^*$ is the cost of the solution and $\fgapmin$
is the smallest difference in cost between any two nodes that may not be on the same branch.
The second algorithm, \zzzlong{} (\zzz),
expands at most logarithmically more nodes than \zoomer{} in the worst case, but replaces $\fgapmin$ with a `local version' $\fgapopt$ for which $\fgapopt \gg \fgapmin$ often holds.
Theoretical guarantees are summarized in \cref{tab:alg_props}.
\zoomer{} and \zzz{} perform very robustly in all domains tested, whereas all other tested algorithms perform poorly in at least one domain.
\begin{table}[htb!]
    \caption{
    All algorithms return a minimal-cost solution and require memory
    linear with the depth of the search;
    $\zoomfact = O(\log (\thresh^*/\fgapmin))$
    and $\zzzfact = O(\log (\thresh^*/\fgapopt))$.
    }
    \label{tab:alg_props}
    \centering
\begin{tabular}{ll}
\toprule
Algorithm & Worst case  \\
\midrule
\eda      & $\Omega(\branch^{\Nopt})$ \\
\idacr    & $\Omega(\branch^{\Nopt})$ \\
\ida      & $\Omega(\Nopt^2)$   \\
\zoomer   & $O(\zoomfact \Nopt)$  \\
\zzz      & $O(\Nopt\zzzfact\log(\Nopt\zzzfact))$ \\
\bottomrule
\end{tabular}
\end{table}

Although \zoomer{} and \zzz{} are not the fastest linear-memory heuristic search algorithms for all problems, they do perform well consistently and their robustness makes them a safe choice. More aggressive algorithms sometimes perform better on certain problems, but can also fail catastrophically, as evidenced by the Coconut problem.
When prior knowledge is available about the structure of the tree it may be preferable to run a safe algorithm like \zoomer{} or \zzz{} in parallel with a more aggressive choice.

The are many interesting directions for future research, including
the use of transposition tables to prevent node re-expansion on graphs \cite{ReinefeldM94}.

\section*{Acknowledgements}
Many thanks to
Csaba Szepesv\'ari,
Andr\'as Gy\"orgy,
J\'anos Kram\'ar,
Roshan Shariff,
and the anonymous reviewers
for their feedback on this work.

\bibliographystyle{named}
\bibliography{ijcai19}

\begin{thebibliography}{}

\bibitem[\protect\citeauthoryear{Bentley and Yao}{1976}]{bentley1976expsearch}
Jon~Louis Bentley and Andrew Chi-Chih Yao.
\newblock An almost optimal algorithm for unbounded searching.
\newblock {\em Information Processing Letters}, 5(3):82 -- 87, 1976.

\bibitem[\protect\citeauthoryear{Burns and Ruml}{2013}]{burns2013idacr}
Ethan Burns and Wheeler Ruml.
\newblock Iterative-deepening search with on-line tree size prediction.
\newblock {\em Annals of Mathematics and Artificial Intelligence},
  69(2):183--205, Oct 2013.

\bibitem[\protect\citeauthoryear{Culberson and
  Schaeffer}{1996}]{CulbersonSchaeffer1996}
Joseph~C. Culberson and Jonathan Schaeffer.
\newblock Searching with pattern databases.
\newblock In {\em Canadian Conference on Artificial Intelligence}, pages
  402--416, 1996.

\bibitem[\protect\citeauthoryear{Dechter and Pearl}{1983}]{DP83}
Rina Dechter and Judea Pearl.
\newblock The optimality of {A}* revisited.
\newblock In {\em AAAI}, pages 95--99, 1983.

\bibitem[\protect\citeauthoryear{Doran and Michie}{1966}]{DM66}
James~E Doran and Donald Michie.
\newblock Experiments with the graph traverser program.
\newblock {\em Proceedings of the Royal Society of London. Series A.
  Mathematical and Physical Sciences}, 294(1437):235--259, 1966.

\bibitem[\protect\citeauthoryear{Dweighter}{1977}]{Dwe77}
Harry Dweighter.
\newblock E2569.
\newblock {\em The American Mathematical Monthly}, 84:296, 04 1977.

\bibitem[\protect\citeauthoryear{Felner \bgroup \em et al.\egroup
  }{2011}]{FelnerZHSSZ11}
Ariel Felner, Uzi Zahavi, Robert Holte, Jonathan Schaeffer, Nathan~R.
  Sturtevant, and Zhifu Zhang.
\newblock Inconsistent heuristics in theory and practice.
\newblock {\em Artificial Intelligence}, 175(9-10):1570--1603, 2011.

\bibitem[\protect\citeauthoryear{{Hart} \bgroup \em et al.\egroup
  }{1968}]{hart1968astar}
P.~E. {Hart}, N.~J. {Nilsson}, and B.~{Raphael}.
\newblock A formal basis for the heuristic determination of minimum cost paths.
\newblock {\em IEEE Transactions on Systems Science and Cybernetics},
  4(2):100--107, 1968.

\bibitem[\protect\citeauthoryear{Hatem \bgroup \em et al.\egroup
  }{2015}]{hatem2015rbfscr}
Matthew Hatem, Scott Kiesel, and Wheeler Ruml.
\newblock Recursive best-first search with bounded overhead.
\newblock In {\em Proceedings of the Twenty-Ninth AAAI Conference on Artificial
  Intelligence}, AAAI'15, pages 1151--1157. AAAI Press, 2015.

\bibitem[\protect\citeauthoryear{Hatem \bgroup \em et al.\egroup
  }{2018}]{hatem2018solving}
Matthew Hatem, Ethan Burns, and Wheeler Ruml.
\newblock Solving large problems with heuristic search: General-purpose
  parallel external-memory search.
\newblock {\em Journal of Artificial Intelligence Research}, 62:233--268, jun
  2018.

\bibitem[\protect\citeauthoryear{Helmert \bgroup \em et al.\egroup
  }{2019}]{helmert2019ibex}
Malte Helmert, Tor Lattimore, Levi H.~S. Lelis, Laurent Orseau, and Nathan~R.
  Sturtevant.
\newblock Iterative budgeted exponential search.
\newblock In {\em IJCAI. Forthcoming}, 2019.

\bibitem[\protect\citeauthoryear{Holte \bgroup \em et al.\egroup
  }{2014}]{holte-tr2014}
Robert~C. Holte, Broderick Arneson, and Neil Burch.
\newblock {PSVN} {M}anual ({J}une 20, 2014).
\newblock Technical Report TR14-03, Computing Science Department, University of
  Alberta, 2014.

\bibitem[\protect\citeauthoryear{Korf}{1985}]{korf1985ida}
Richard~E. Korf.
\newblock Depth-first iterative-deepening: An optimal admissible tree search.
\newblock {\em Artificial Intelligence}, 27(1):97 -- 109, 1985.

\bibitem[\protect\citeauthoryear{Korf}{1993}]{1993rbfskorf}
Richard~E. Korf.
\newblock Linear-space best-first search.
\newblock {\em Artificial Intelligence}, 62(1):41 -- 78, 1993.

\bibitem[\protect\citeauthoryear{Lammertink}{1989}]{Lam89}
Ferdinand Lammertink.
\newblock Puzzle or game having token filled track and turntable, October~3
  1989.
\newblock US Patent 4,871,173.

\bibitem[\protect\citeauthoryear{Luby \bgroup \em et al.\egroup
  }{1993}]{luby1993speedup}
Michael Luby, Alistair Sinclair, and David Zuckerman.
\newblock Optimal speedup of {L}as {V}egas algorithms.
\newblock {\em Inf. Process. Lett.}, 47(4):173--180, September 1993.

\bibitem[\protect\citeauthoryear{Mérõ}{1984}]{mero1984pathmax}
László Mérõ.
\newblock A heuristic search algorithm with modifiable estimate.
\newblock {\em Artificial Intelligence}, 23(1):13 -- 27, 1984.

\bibitem[\protect\citeauthoryear{Orseau \bgroup \em et al.\egroup
  }{2018}]{orseau2018single}
Laurent Orseau, Levi Lelis, Tor Lattimore, and Theophane Weber.
\newblock Single-agent policy tree search with guarantees.
\newblock In {\em NeurIPS}, pages 3205--3215, 2018.

\bibitem[\protect\citeauthoryear{Pekonen \bgroup \em et al.\egroup
  }{2007}]{PSS07}
Osmo Pekonen, Jerry Shcum, Die Sonneveld, and Aaron Archer.
\newblock The 15 puzzle: how it drove the world crazy.
\newblock {\em The Mathematical Intelligencer}, 29(2):83--85, 2007.

\bibitem[\protect\citeauthoryear{Reinefeld and Marsland}{1994}]{ReinefeldM94}
Alexander Reinefeld and T.~Anthony Marsland.
\newblock Enhanced iterative-deepening search.
\newblock {\em {IEEE} Transactions on Pattern Analysis and Machine
  Intelligence}, 16(7):701--710, 1994.

\bibitem[\protect\citeauthoryear{Russell}{1992}]{Rus92}
Stuart Russell.
\newblock Efficient memory-bounded search methods.
\newblock {\em ECAI-1992, Vienna, Austria}, 1992.

\bibitem[\protect\citeauthoryear{Sarkar \bgroup \em et al.\egroup
  }{1991}]{sarkar1991reducing}
U.K. Sarkar, P.P. Chakrabarti, S.~Ghose, and S.C.~De Sarkar.
\newblock Reducing reexpansions in iterative-deepening search by controlling
  cutoff bounds.
\newblock {\em Artificial Intelligence}, 50(2):207 -- 221, 1991.

\bibitem[\protect\citeauthoryear{Sen and Bagchi}{1989}]{sen1989fast}
Anup~K Sen and Amitava Bagchi.
\newblock Fast recursive formulations for best-first search that allow
  controlled use of memory.
\newblock In {\em IJCAI}, pages 297--302, 1989.

\bibitem[\protect\citeauthoryear{Sharon \bgroup \em et al.\egroup
  }{2014}]{sharon2014eda}
Guni Sharon, Ariel Felner, and Nathan~R. Sturtevant.
\newblock Exponential deepening {A}* for real-time agent-centered search.
\newblock In Carla~E. Brodley and Peter Stone, editors, {\em Proceedings of the
  Twenty-Eighth {AAAI} Conference on Artificial Intelligence, July 27 -31,
  2014, Qu{\'{e}}bec City, Qu{\'{e}}bec, Canada.}, pages 871--877. {AAAI}
  Press, 2014.

\bibitem[\protect\citeauthoryear{Wah and Shang}{1994}]{wah1994rida}
Benjamin~W. Wah and Yi~Shang.
\newblock Comparision and evaluation of a class of {IDA}* algorithms.
\newblock {\em International Journal on Artificial Intelligence Tools},
  03(04):493--523, dec 1994.

\bibitem[\protect\citeauthoryear{Zhou and Hansen}{2003}]{ZH03}
Rong Zhou and Eric~A Hansen.
\newblock Sweep {A*}: space-efficient heuristic search in partially ordered
  graphs.
\newblock In {\em Proceedings. 15th IEEE International Conference on Tools with
  Artificial Intelligence}, pages 427--434. IEEE, 2003.

\end{thebibliography}

\ifdefined\suppmat{}

\clearpage

\appendix

\section{Generalized analysis of \zzzlong{}}

The bound in \cref{thm:zzz} can be improved slightly. Let $K = \min\{k : N_0 2^k \geq N^*\}$. All iterations $k < K$ will never find the solution. Iterations $k \geq K$ will eventually find the solution. In the analysis of \zzz{} we simply bounded the number of node expansions of iteration $K$, but it can happen that an even larger $k > K$ will find the solution earlier. Precisely, the $k$th program will halt once it calls \deepening{} with $\theta \in [\theta^*, \theta_k]$ with $\theta_k = \max\{\theta : N(\theta) \leq N_0 2^k\}$.
The number of nodes expanded before iteration $k$ halts is
\begin{align*}
    T_k \leq N_0 2^k \omega(\theta^*, \theta_k)\,,
\end{align*}
where $\omega(a, b)$ is the number of `zooms' before iteration $k$ calls \deepening{} with $\theta \in [a, b]$, which satisfies
\begin{align*}
\omega(\theta^*, \theta_k) =
O\left(\log\left(\frac{\theta^*}{\theta_0}\right) + \log\left(\frac{\theta^*}{\theta_k - \theta^*}\right)\right)\,.
\end{align*}
Then by \cref{thm:uds}, \zzz{} will find the solution after
\begin{align*}
    O\left(\min_{k \geq K} T_k \log(T_k)\right)
\end{align*}
node expansions.
Exactly which program $k \geq K$ minimizes $T_k \log(T_k)$ depends on the blowup of $N(\thresh)$ about $\thresh = \thresh^*$.

\section{Optimized \zzz}

\Cref{lst:zzz_fast} features a few logical optimizations compared to \cref{lst:zzz}.
With a little work, they can be used to improve the bound
of \cref{thm:zzz}, but we will not do this here.

The first improvement is to stop the search
using \texttt{up\_min} as is done for \zoomer,
since once the interval $\texttt{upper}-\texttt{lower}$ is smaller
than the gap $\fgap(\texttt{lower})$, provably no solution can be found.

The second one is to replace the individual lower bounds with a global lower bound
that any program can raise: indeed, when a program raises the lower bound
this means that it has explored all nodes below it without success
and thus no other program can find a solution below that cost either.
Then, to
stop testing the programs that have been proven to not having enough budget to find a solution,
we simply skip the indices $j$ of these programs,
using the properties of \magic,
so that skipped programs take zero computation time.

\paragraph{More improvements}
A further improvement (valid for \zoomer{} as well)
would gather the cost of a solution if one is found but the budget is exceeded.
This cost could then be used as a global upper bound.
This would require modifying the \deepening{} algorithm.

For most iterative deepening based algorithms,
It is also common to avoid re-evaluating whether a node is a solution.
This can be done by passing the previous threshold $\thresh$ to \deepening{}
and evaluating only nodes of a larger cost.
The number of evaluation calls would then only be equal to the number of nodes expanded during the call to \deepening{} with the largest threshold.

In \deepening{}, when a solution has been found, children of cost equal to the solution cost need not be expanded. It is not clear however if the saving in node expansions would be worth the computation cost of the test.

Detection of duplicate states can be performed along the current trajectory to avoid loops in the underlying graph, while keeping a memory that grows only linearly with the depth of the search, but is now a multiple of the size of a state.

\begin{figure}[bt!]
\begin{lstlisting}[label=lst:zzz_fast,caption={The optimized \zzz\ algorithm.},numbers=none]
def (*\zzzlong*)V2(root):
  upper := [] # growable vector
  lower := $\cost$(root)
  res, $\nodecount_0$, _, up_min := (*\deepening*)(root, lower, $+\infty$)
  if res is a Node: return res
  kmin := 0
  j := 0
  repeat forever:
    # Skip steps with provably no solution
    j += $2^\text{kmin}$
    k := $\log_2(\magic($j$))$ # exact
    if k < len(upper):
      if upper[k] <= up_min:
        # Will skip all progs <= k
        kmin := k+1
        # Will move to next factor of $2^{\texttt{kmin}}$
        j -= $2^\texttt{k}$
        continue
      $\thresh$ := (upper[k] + lower) / 2
    else:
      $\thresh$ := 2 $\times$ lower
    $\thresh$ := max($\thresh$, up_min)
    res, _, $\threshdown$, $\threshup$ := (*\deepening*)(root, $\thresh$, $\nodecount_0 2^k$)
    if res is a Node:
      return res # solution found
    elif res == "budget exceeded":
      upper[k] := $\threshdown$ # set or reduce upper bound
    else:
      # Search terminated within budget without
      # a solution. No program can find a
      # solution below this cost.
      lower := $\thresh$
      up_min := $\threshup$
\end{lstlisting}
\end{figure}

\fi 

\end{document}